\documentclass[11pt]{article}

\usepackage[preprint]{acl}
\usepackage{times}
\usepackage{latexsym}
\usepackage[T1]{fontenc}
\usepackage[utf8]{inputenc}
\usepackage{microtype}
\usepackage{inconsolata}
\usepackage{graphicx}
\usepackage{booktabs}
\usepackage{multirow}
\usepackage{tabularx}
\usepackage{array}
\usepackage{amsmath}
\usepackage{amssymb}
\usepackage{amsthm}
\usepackage{xcolor}
\usepackage{enumitem}
\usepackage{algorithm}
\usepackage{algpseudocode}
\usepackage{placeins}
\usepackage{url}
\usepackage{hyperref}
\usepackage{xspace}
\usepackage[most]{tcolorbox}

\hypersetup{
  colorlinks=true,
  citecolor=blue!55!black,
  linkcolor=blue!55!black,
  urlcolor=blue!55!black
}
\newcolumntype{Y}{>{\raggedright\arraybackslash}X}
\newcommand{\sys}{\textsc{MemOnDemand}\xspace}
\newcommand{\dataset}{\textsc{EnterpriseRAG}\xspace}

\newtheorem{proposition}{Proposition}

\newtcolorbox{promptbox}[2][blue]{
  colback=#1!3,
  colframe=#1!55!black,
  coltitle=white,
  colbacktitle=#1!55!black,
  title=\textbf{#2},
  fonttitle=\small,
  fontupper=\footnotesize,
  boxrule=0.7pt,
  arc=1.5pt,
  left=5pt,right=5pt,top=4pt,bottom=4pt
}

\title{\sys: A Memory Management System for Large-Scale Enterprise Data}

\author{
{\normalfont Xinyuan Song$^{1,2}$ \quad
Bowen Zhu$^{1}$} \quad
{\normalfont Hasibul Haque$^{1}$ \quad
Liang Zhao$^{1,2}$} \\
{\normalfont $^{1}$Causal Dynamics Lab, USA \quad
$^{2}$Emory University, USA \quad
}
}

\begin{document}
\maketitle

\begin{abstract}
Enterprise repositories are large, heterogeneous, and continuously updated, making retrieval difficult when efficient access, source-faithful evidence, and cross-query adaptation must be supported together. Enterprise memory extends retrieval beyond the model context, but existing systems do not jointly address collection-specific hierarchy construction, low-cost routing, detailed evidence loading, and workload-aware memory updates at this scale. We introduce \sys, short for \emph{On-Demand Memory}, a memory management system with three coordinated mechanisms: a dynamic multi-level hierarchy that determines the abstraction structure and depth for each collection, dual memory at every hierarchy level that separates distilled routing from detailed evidence, and on-demand memory promotion that updates node priority under a bounded active-state budget. On EnterpriseRAG-Bench, \sys outperforms the strongest published LB\#1 result at every evaluated scale from 10M tokens through the complete 618M-token collection, with gains of 12.23\% at 10M and 4.66\% at 618M. Results on FinanceBench, HotpotQA, and FRAMES further show strong performance across financial, multi-hop, and fact-retrieval settings. Together, these results establish \sys as an accurate, efficient, and scalable memory solution for very large enterprise repositories across data scales, domains, and evidence requirements. Our code is available at \url{https://github.com/xfab-xinyuansong/MemOnDemand.git}.
\end{abstract}

\section{Introduction}
\label{sec:introduction}

Enterprise repositories contain policies, contracts, email, tables, tickets, code, and records produced by many teams and software systems. In large organizations, these repositories can reach hundreds of millions or billions of searchable items and continue to grow as new records, revisions, and derived data are added~\cite{chang2006bigtable,oneil1996lsm,subramanya2019diskann,wang2021milvus,armbrust2021lakehouse}. Their contents differ in format, scope, and authority: a policy may define a general rule, a contract may specify an exception, an email may record a later decision, and a ticket or table may contain the identifier needed to apply that decision~\cite{armbrust2021lakehouse,codd1970relational}. Their structure also changes with departments, data sources, access patterns, and document versions~\cite{oneil1996lsm,chaudhuri2007selftuning}. Enterprise retrieval must therefore search a large, heterogeneous, and changing collection while preserving exact records and traceable source identities. Approximate similarity indices make large candidate spaces searchable~\cite{malkov2020hnsw,subramanya2019diskann,wang2021milvus}, but they do not define how records should be organized, represented, loaded for answering, or maintained across queries.

Enterprise memory makes information from large repositories available beyond the model context and reusable across queries~\cite{lewis2020rag,wang2023longmem}. Existing systems retrieve interaction histories~\cite{zhong2023memorybank}, derive higher-level reflections~\cite{park2023generative}, manage active and external memory tiers~\cite{packer2023memgpt}, and support memory extraction, organization, updating, and forgetting~\cite{chhikara2025mem0,xu2025amem,kang2025memoryos,li2025memos}. However, enterprise memory must scale to hundreds of millions or billions of items while preserving source identity, update consistency, and auditable evidence use. Compressed memories may omit answer-critical details~\cite{jiang2023llmlingua,jiang2024longllmlingua,xu2024recomp,packer2023memgpt}, stored memories may become stale~\cite{wang2023longmem}, and most agent-memory systems do not target source-level citation over large multi-source repositories~\cite{park2023generative,zhong2023memorybank,chhikara2025mem0,xu2025amem}. Long-context models also remain sensitive to irrelevant context~\cite{liu2024lost,hsieh2024ruler}. Enterprise memory must therefore support large-scale retrieval and reuse without allowing compressed or stale memories to replace authoritative sources.

The heterogeneous structure of enterprise data motivates hierarchical memory, where high-level records support routing and lower-level records retain source-specific evidence. Prior work studies recursive, graph-based, and multi-level retrieval~\cite{sarthi2024raptor,edge2024graphrag,gutierrez2024hipporag,qian2024memorag,sun2025hmem}. However, enterprise collections differ across domains and tenants, so a fixed hierarchy or predefined taxonomy cannot fit all settings. Enterprise memory should instead infer both the hierarchy and its depth from each collection, while preserving direct source-leaf access when higher-level abstractions omit relevant details~\cite{subramanya2019diskann,indyk2023worstcase}.

Repository scale also requires separating memory for routing from memory for evidence. Detailed memory preserves source-specific content but is expensive to search and load, while compressed or compact memory reduces cost but may omit answer-critical details~\cite{jiang2023llmlingua,jiang2024longllmlingua,xu2024recomp,wang2023longmem,packer2023memgpt,qian2024memorag}. Enterprise retrieval therefore needs dual representations: distilled memory for efficient routing and detailed memory for generation and citation. Both must resolve to the same source ID so that compressed records guide access without replacing the authoritative source~\cite{codd1970relational,abadi2007materialization}.

Enterprise memory must also adapt to changing data and workloads. Eagerly preparing all representations wastes computation and storage on sources that may never be used, while adaptive data systems update access structures according to observed demand~\cite{idreos2007cracking,megiddo2003arc,chaudhuri2007selftuning,oneil1996lsm}. Enterprise memory should therefore promote useful sources on demand, refresh repeatedly accessed records, and demote stale or low-value ones under a fixed capacity, without bypassing source validation or evidence-loading constraints.

We introduce \sys to address these three problems through a unified memory management design. First, \emph{dynamic multi-level hierarchy} determines both the abstraction structure and the number of levels for each collection, allowing the same system to adapt across domains and tenants while retaining direct access to L0 source nodes. Second, \emph{dual memory at each hierarchy level} separates retrieval from answering: distilled memory supports efficient routing, while selected detailed memory provides the evidence used for generation and citation. Third, \emph{on-demand memory promotion} adjusts node priority from observed use, refreshes repeatedly accessed nodes, and demotes stale or low-value nodes under a bounded active-state budget. Together, these mechanisms improve retrieval, ranking, and evidence selection by coordinating hierarchy navigation, compact routing, detailed evidence loading, and cross-query adaptation within one memory manager.

Figures~\ref{fig:dynamic}--\ref{fig:promotion} illustrate the three mechanisms and their interaction. We evaluate \sys on EnterpriseRAG-Bench, a large-scale enterprise retrieval benchmark with approximately 500,000 documents~\cite{sun2026enterpriserag}. Against LB\#1, the strongest published solution on this benchmark, \sys improves Combined by 12.23\% at 10M source tokens and remains 4.66\% higher on the complete 618M-token collection. Dual-memory selective loading reduces answer-input tokens by 70.2\% relative to detailed-only loading, while the persisted hierarchy becomes query-ready in 1.46 seconds rather than the estimated 593.5 seconds required for eager preparation. Results on FinanceBench, HotpotQA, and FRAMES further show strong performance across financial, multi-hop, and fact-retrieval settings. Together, these results establish \sys as an accurate, efficient, and scalable memory solution for very large enterprise repositories.

The main contributions of this work are summarized as follows:

\begin{itemize}[itemsep=1pt,topsep=2pt]
    \item We introduce \sys, a memory management system for large-scale enterprise retrieval that jointly manages hierarchical organization, retrieval representation, evidence loading, and cross-query memory state.
    \item We develop three coordinated mechanisms: a dynamic multi-level hierarchy that adapts its depth and abstractions to each collection, dual memory at every hierarchy level that separates efficient routing from source-faithful answering, and on-demand memory promotion that adapts node priority to changing workloads under a bounded active-state budget.
    \item We demonstrate that a managed memory system can operate successfully at ultra-large scale: on the complete 618M-token EnterpriseRAG memory, \sys reaches 72.88\% Combined, 4.66\% above the published LB\#1 reference. Across scales from 10M tokens to 618M and on three external benchmarks, it also reduces answer-input cost, improves query readiness, and maintains strong performance on FinanceBench, HotpotQA, and FRAMES.
\end{itemize}

\section{Problem Formulation}
\label{sec:problem}

We consider retrieval over a large enterprise collection \(\mathcal{D}=\{d_i\}_{i=1}^{n}\) for a sequence of queries \(\mathcal{Q}=(q_1,q_2,\ldots)\). Each source preserves its original content and identity, while the memory system may construct additional representations and states to support retrieval~\cite{codd1970relational,abadi2007materialization}.

Given a query \(q_t\), the system retrieves an evidence set \(E_t\subseteq\mathcal{D}\) and generates an answer \(a_t\) with citations \(C_t\). The answer must satisfy
\[
C_t\subseteq\mathrm{ID}(E_t),\qquad T(q_t,E_t)\le B,
\]
where \(B\) is the answer-input budget. Thus, every citation must refer to evidence provided to the answer model, and the selected evidence must remain within the available context.

The main problem is to make retrieval accurate and efficient as the collection grows and changes. This requires a hierarchy that can adapt to different collections, compact memory for efficient retrieval, detailed memory for answering, and cross-query updates that prioritize useful information without retaining unlimited state. These requirements are important because increasing the amount of retrieved context does not guarantee that the model will identify and use the decisive evidence~\cite{liu2024lost,hsieh2024ruler}.

\section{\sys}
\label{sec:system}

\subsection{System Contract and Execution}

\sys maintains three forms of state aligned with its three mechanisms: a dynamic hierarchy over memory nodes, distilled and detailed memory at each node, and cross-query promotion state. Each node records its hierarchy level, two memory representations, token cost, child relations, and current promotion status. This design follows the database principle that multiple access paths and derived structures should remain tied to the same underlying record~\cite{codd1970relational,abadi2007materialization}.

Given a query, \sys performs coarse-to-fine retrieval over distilled memory, starting from the highest available level and descending from \(L_{\ell}\) to \(L_{\ell-1}\), \(L_{\ell-2}\), and finally L0~\cite{sarthi2024raptor,edge2024graphrag}. Sentence embeddings support efficient search at each level~\cite{reimers2019sbert}. The retrieved L0 candidates are then ranked with promotion signals, after which detailed memory is loaded under the answer-input budget \(B\). Only the selected detailed memory is provided to the answer model, keeping generation independent from the retrieval and storage components~\cite{lewis2020rag}.

This execution order assigns a distinct role to each mechanism: the dynamic hierarchy organizes and narrows retrieval, dual memory separates routing from evidence, and on-demand promotion adapts candidate priority across queries. Algorithm~\ref{alg:online} summarizes the complete pipeline, and the following subsections describe each mechanism.

\subsection{Dynamic Multi-Level Hierarchy}
\label{sec:dynamic}

\sys organizes enterprise memory as a dynamic sequence of abstraction levels. Level L0 contains the original source-level records and their distilled routing representations. Level L1 groups related L0 records and forms the first abstraction layer, while L2 summarizes related L1 records at a higher semantic level. The same process can continue to L3, L4, and beyond when the collection supports further abstraction. Rather than fixing the hierarchy depth in advance, \sys decides both how to group records at each level and whether another higher level should be created.

\begin{figure*}[!ht]
  \centering
  \includegraphics[width=0.9\linewidth]{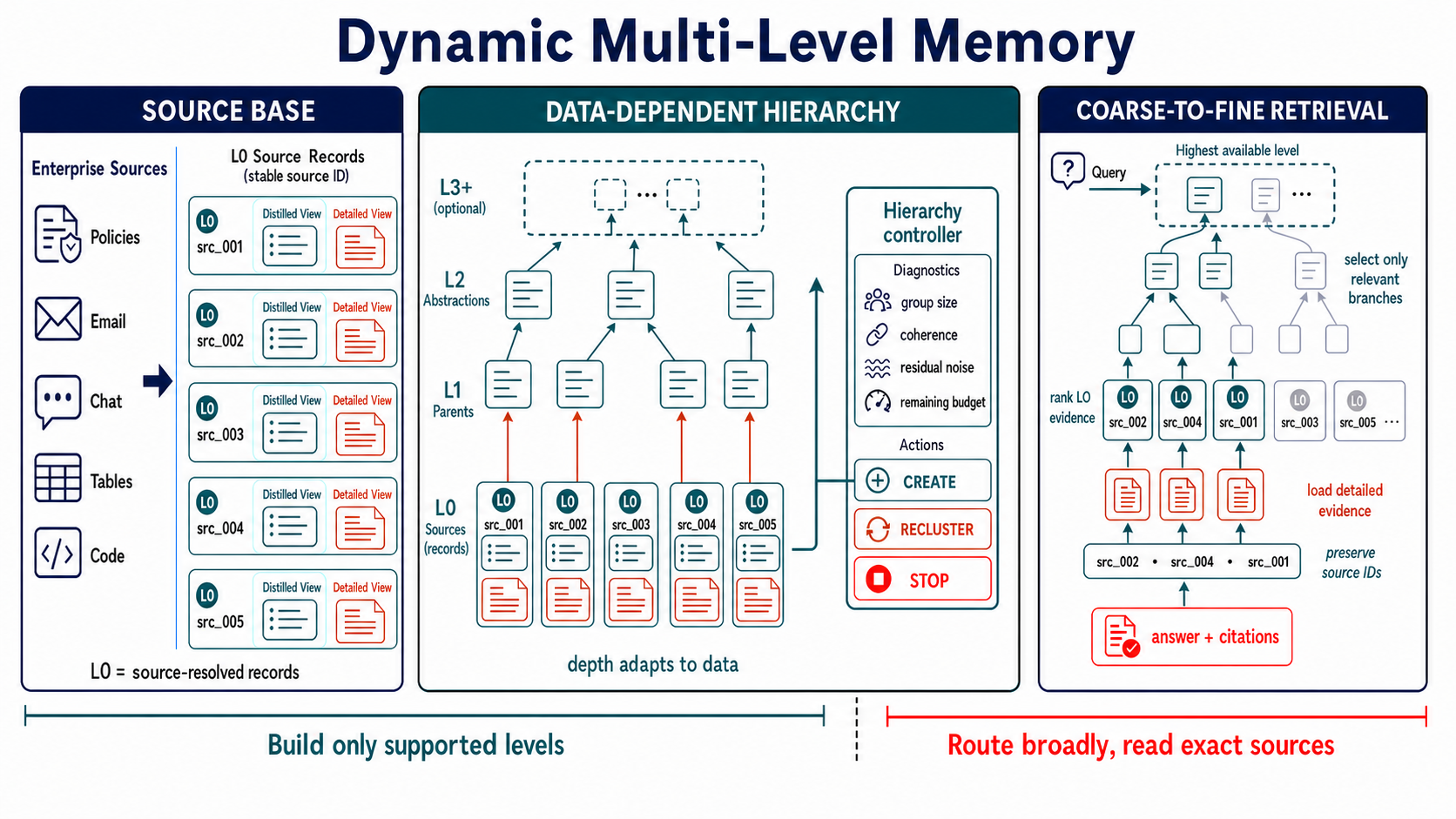}
  \caption{\textbf{Dynamic multi-level hierarchy construction.} L0 stores source-level records, L1 and L2 provide increasingly higher abstractions, and the controller decides whether to create, revise, or stop each additional level.}
  \label{fig:dynamic}
\end{figure*}

Starting from level \(L_\ell\), \sys clusters the distilled representations of its records and forms a proposal for level \(L_{\ell+1}\). The proposal includes group-size statistics, cluster coherence, residual noise, sampled content, and the remaining construction budget. An LLM controller then returns \textsc{Create}, \textsc{Recluster}, or \textsc{Stop}. \textsc{Create} accepts the proposed grouping and constructs \(L_{\ell+1}\), \textsc{Recluster} revises the grouping before reevaluation, and \textsc{Stop} ends construction when a higher abstraction level is not supported by the current data. This procedure determines the hierarchy structure and depth separately for each collection, allowing different domains and tenants to produce different numbers of levels and different abstractions~\cite{sarthi2024raptor,edge2024graphrag}. Each accepted parent record retains links to all descendant source records. Figure~\ref{fig:dynamic} illustrates the construction process, Table~\ref{tab:hierarchy} reports the resulting hierarchy sizes, and Algorithm~\ref{alg:hierarchy} gives the full procedure. GPT-5.4 mini produces the L0 distilled records, while GPT-5.4 evaluates and constructs higher abstraction levels.

At query time, \sys begins at the highest available level \(L_{\ell}\) and retrieves the records most relevant to the query. The controller then evaluates whether the selected records provide sufficient information to continue routing. If not, \sys follows their child links to level \(L_{\ell-1}\), performs retrieval again within those selected branches, and repeats the same decision. The process proceeds from \(L_{\ell}\) to \(L_{\ell-1}\), \(L_{\ell-2}\), and so on until it reaches L0. At L0, the retrieved source records provide the detailed evidence used for final ranking, generation, and citation.

\subsection{Dual Memory at Each Hierarchy Level}
\label{sec:representations}

Every hierarchy node, from L0 source nodes to higher-level abstraction nodes, maintains two complementary representations: \emph{distilled memory} and \emph{detailed memory}. Distilled memory is a compact retrieval representation that preserves the main topic, entities, identifiers, and decision-relevant facts of the node. Detailed memory retains the full information represented by that node: the original source content at L0 and the complete abstraction constructed from child nodes at higher levels. This design allows \sys to search compact memory throughout the hierarchy and load richer content only when it is needed for answering~\cite{jiang2023llmlingua,jiang2024longllmlingua,xu2024recomp}.

For a node \(v\) at level \(L_\ell\), \sys stores
\begin{equation}
v=(\ell_v,b_v,d_v,\tau_v^b,\tau_v^d),
\end{equation}
where \(\ell_v\) is its hierarchy level, \(b_v\) and \(d_v\) are its distilled and detailed memories, and \(\tau_v^b\) and \(\tau_v^d\) are their token counts. The two representations are indexed and accessed separately. During hierarchical retrieval, \sys searches distilled memory at level \(L_\ell\), selects relevant nodes, and descends to their children at \(L_{\ell-1}\). Detailed memory is not loaded during this routing process, avoiding the token cost of repeatedly reading full node content across hierarchy levels.

After retrieval reaches L0, \sys combines signals from all hierarchy paths, ranks the resulting source candidates, and loads detailed memory only for the highest-ranked sources that fit the answer-input budget. Distilled memory guides retrieval, while detailed memory provides the evidence used for generation and citation. This separation avoids repeatedly loading full content during hierarchy traversal and therefore reduces token consumption~\cite{abadi2007materialization}. When multiple routes reach the same L0 source, their signals are fused into a single candidate score before detailed memory is loaded. Figure~\ref{fig:representations} illustrates how distilled memory supports hierarchical retrieval while only selected detailed memories enter generation and citation.

\begin{figure*}[!ht]
  \centering
  \includegraphics[width=0.9\linewidth]{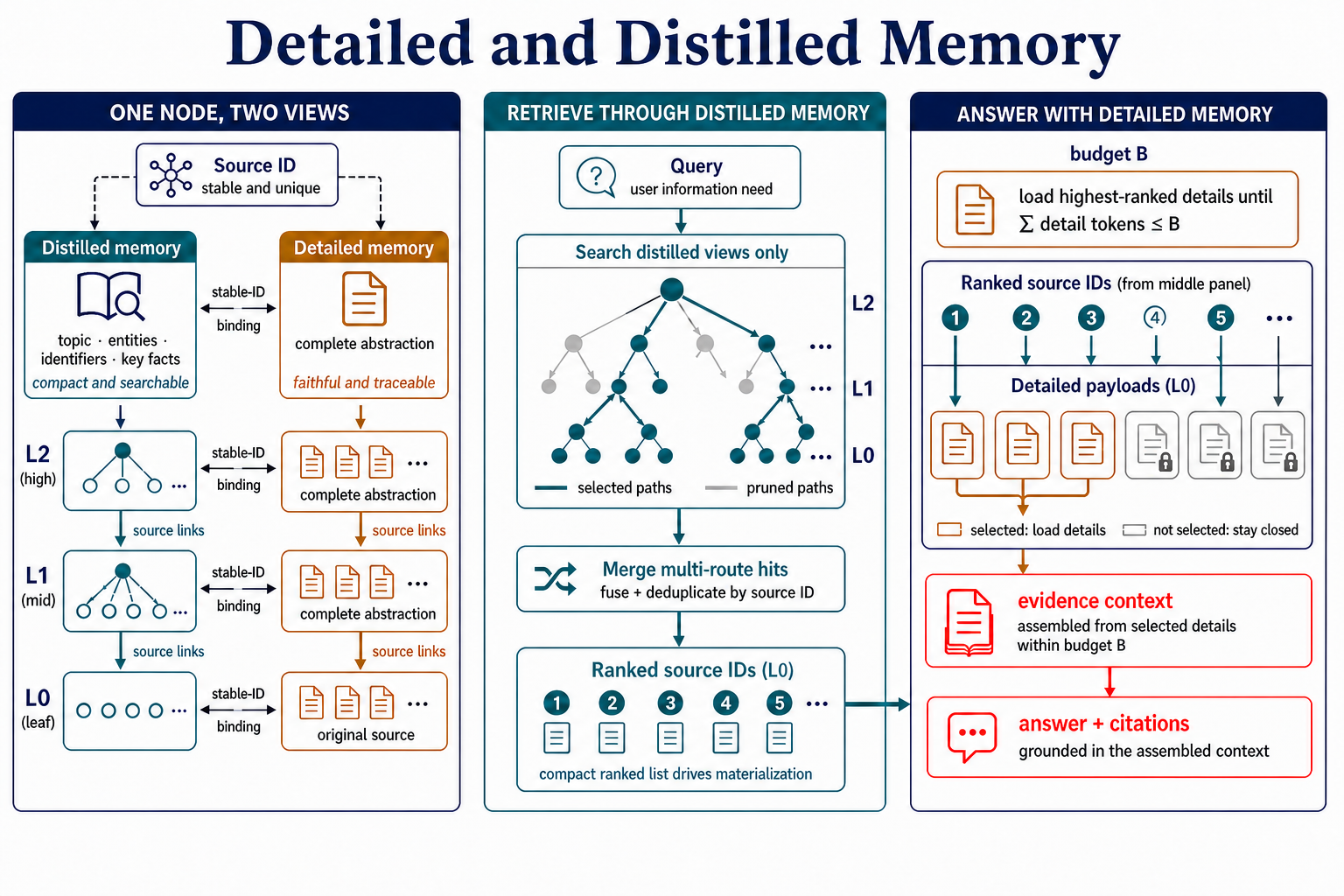}
  \caption{\textbf{Dual memory across hierarchy levels.} Each node stores distilled memory for low-cost retrieval and detailed memory for information preservation, while only selected detailed memories enter the answer context.}
  \label{fig:representations}
\end{figure*}

Algorithm~\ref{alg:selection} summarizes the execution order: retrieve distilled memory across hierarchy levels, merge signals for each L0 source, rank the resulting candidates, and load detailed memory until the answer-input budget is reached.

\subsection{On-Demand Memory Promotion}
\label{sec:promotion}

Enterprise workloads change over time, so the value of a memory node cannot be fixed at construction time~\cite{chaudhuri2007selftuning,oneil1996lsm}. Preparing or prioritizing every node in advance wastes computation and storage on records that may never be used~\cite{abadi2007materialization,idreos2007cracking}. \sys therefore updates memory priority only when query-time evidence indicates that a node is useful, avoiding unnecessary up-front preparation while adapting retrieval to observed demand~\cite{idreos2007cracking,megiddo2003arc,chaudhuri2007selftuning}.

For each candidate node \(v\) at query step \(t\), the promotion controller uses the query, hierarchy level, retrieval scores, node metadata, and recent usage state to produce a promotion score \(g_t(v)\in[0,1]\). If \(g_t(v)\ge\theta\), the node is promoted, increasing its current retrieval score and recording its value for later queries. Repeated access refreshes the node, whereas inactive nodes gradually lose priority, allowing promotion to support both current-query selection and reusable cross-query state within the bounded retrieval process~\cite{megiddo2003arc,zhong2023memorybank}. Promotion updates retrieval priority and cross-query state, but detailed memory is still loaded separately under the answer-input budget~\cite{abadi2007materialization,packer2023memgpt}.

Because only a limited number of nodes can remain active, \sys maintains a promotion budget \(K\) and enforces \(|\operatorname{active}(\mathbf{z}_t)|\le K\). Each promoted node receives a retention score
\begin{equation}
\rho_t(v)=\max\left(0,s_t(v)-\frac{t-t_{\mathrm{last}}(v)}{\tau}\right),
\label{eq:retention}
\end{equation}
where \(s_t(v)\) is the stored promotion score, \(t_{\mathrm{last}}(v)\) is the most recent access time, and \(\tau\) controls decay. Repeated use refreshes the score, while stale nodes gradually lose retention value. When the number of active nodes exceeds \(K\), \sys demotes expired nodes first and then removes the lowest-retention nodes until the budget is restored. A demoted node can be promoted again when later queries make it useful. Each promotion, refresh, and demotion is written to an append-only log, making the evolution of memory priority traceable.

Figure~\ref{fig:promotion} illustrates this lifecycle. Promotion reacts to observed query demand, avoids unnecessary advance preparation, and adapts candidate priority within a bounded online process. The active-node budget limits retained state, while decay removes stale or low-value nodes when capacity is reached. This process allows \sys to adapt retrieval across changing workloads without rebuilding the hierarchy or treating promoted nodes as answer evidence.

\begin{figure*}[!ht]
  \centering
  \includegraphics[width=0.9\linewidth]{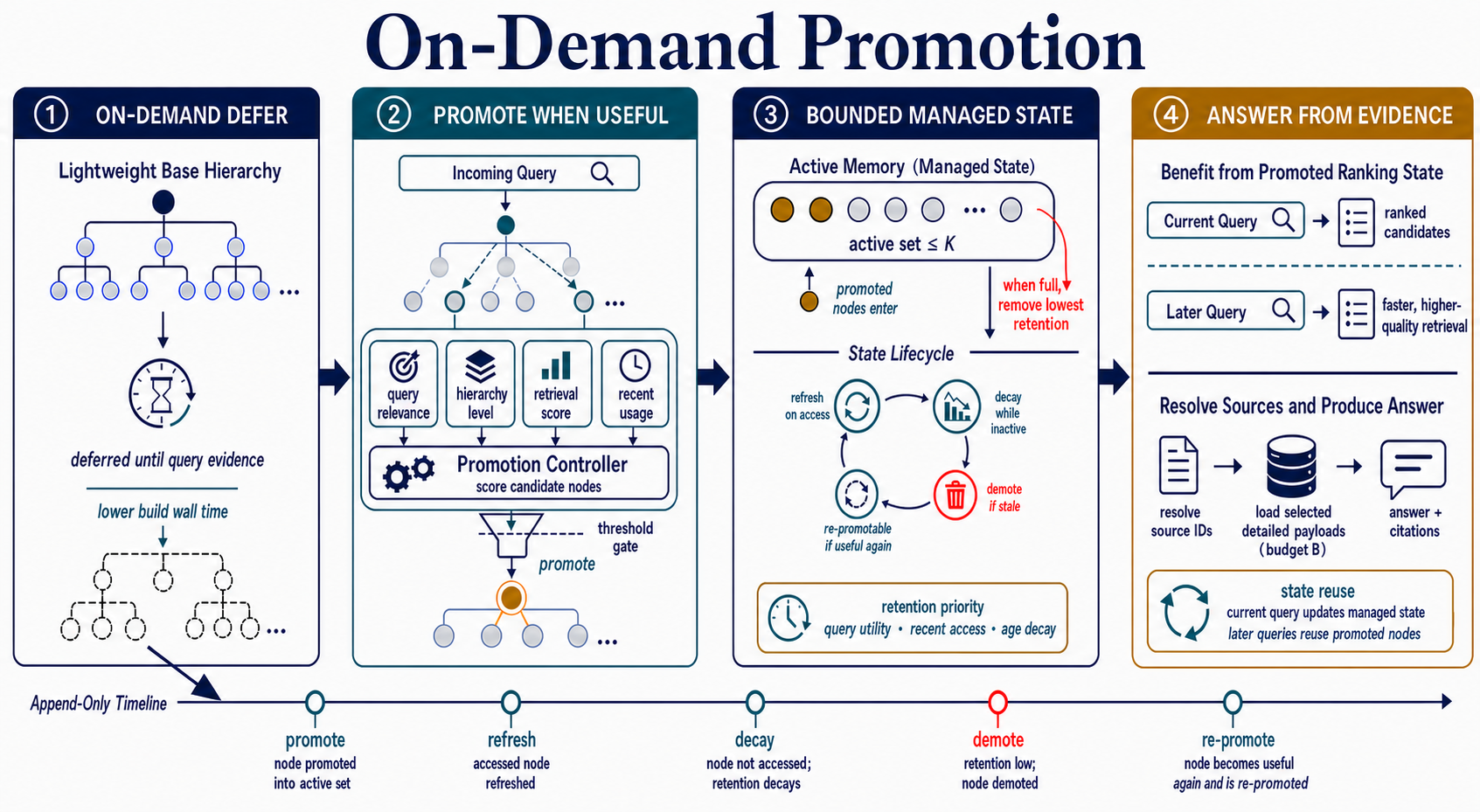}
  \caption{\textbf{On-demand memory promotion.} Query use promotes relevant nodes, repeated access refreshes them, and decay demotes stale nodes, while detailed memory is loaded separately for answering.}
  \label{fig:promotion}
\end{figure*}

\begin{algorithm}[!ht]
\caption{\sys construction and query execution}
\label{alg:online}
\small
\begin{algorithmic}[1]
\Require sources \(\mathcal D\), queries \(\mathcal Q\), answer-input budget \(B\)
\State build L0 memory and dynamically construct higher abstraction levels
\For{\(q_t\in\mathcal Q\)}
  \State retrieve distilled memory from the highest level to L0
  \State score candidate nodes and apply on-demand promotion
  \State refresh reused nodes and demote nodes whose retention scores expire
  \State rank L0 candidates and load detailed memory until budget \(B\) is reached
  \State generate the answer from loaded detailed memory and log memory-state changes
\EndFor
\State \Return answers, retrieved evidence, updated memory state, and audit events
\end{algorithmic}
\end{algorithm}

\paragraph{System properties.}
Appendix~\ref{app:properties} describes the retrieval, context-accounting, and memory-update properties of \sys. These properties specify how the three mechanisms interact during execution rather than introducing a separate theoretical objective.

\section{Experimental Setup}
\label{sec:experiments}

Our primary evaluation uses EnterpriseRAG-Bench (\dataset), a synthetic benchmark for enterprise retrieval over heterogeneous sources, multi-source evidence, and citation-based answering~\cite{sun2026enterpriserag}. We evaluate seven collection sizes containing 10M, 20M, 60M, 100M, 150M, 250M, and the complete 618M source tokens. Each system answers the same 500 questions at every scale and returns both an answer and the source IDs used to support it. We additionally evaluate on FinanceBench for financial-document question answering~\cite{islam2023financebench}, HotpotQA for multi-hop question answering~\cite{yang2018hotpotqa}, and FRAMES for fact retrieval and reasoning over multiple sources~\cite{krishna2024frames}.

\paragraph{EnterpriseRAG-Bench metrics.} Combined is the primary benchmark score and summarizes answer and evidence quality. Combined, \(\Delta\) Combined, Correct, Complete, and Document Recall are reported as percentages, with \(\Delta\) Combined denoting the absolute percentage difference from LB\#1. Evidence F1 and Invalid Document Ratio (InvDoc) are reported on the \([0,1]\) scale, where higher Evidence F1 and lower InvDoc are better. Promo and Demote report the total numbers of accepted promotion and demotion transitions across the complete query run. Table~\ref{tab:scaling} also includes LB\#1, the strongest published \dataset result under the same 500-question protocol and a collection of approximately 500,000 documents~\cite{sun2026enterpriserag}.
Additional implementation details, including model assignment, retrieval settings, evaluation rules, failure handling, indexing, and caching, are provided in Appendix~\ref{app:implementation} and summarized in Table~\ref{tab:implementation}.

\section{Results}
\label{sec:results}

\subsection{Scaling to the Full 618M-Token Collection}
\label{sec:scaling}

Table~\ref{tab:scaling} presents the primary scaling results. \sys outperforms LB\#1, the strongest published \dataset result, at all seven corpus tiers from 10M tokens through the complete 618M-token collection. The gain is 12.23\% at 10M, remains 8.26\% at 100M and 6.26\% at 250M, and is still 4.66\% on the full collection. At 618M, Correct remains 80.0\% and Complete reaches 74.28\%, despite Document Recall falling to 48.77\% and InvDoc increasing to 0.5019. This result exposes the principal scaling trade-off: source recovery and citation validity become harder over the full corpus, yet the retrieved evidence remains sufficiently useful for \sys to preserve a clear Combined advantage. Figure~\ref{fig:scaling} provides the corresponding trends.

\begin{table*}[!ht]
\centering
\setlength{\tabcolsep}{4.2pt}
\resizebox{0.95\textwidth}{!}{%
\begin{tabular}{lrrrrrrrrr}
\toprule
Tier & Combined \(\uparrow\) & \(\Delta\) Combined & Correct \(\uparrow\) & Complete \(\uparrow\) & DocRcl \(\uparrow\) & Evid. F1 \(\uparrow\) & InvDoc \(\downarrow\) & Promo & Demote \\
 & (\%) & & (\%) & (\%) & (\%) & & & & \\
\midrule
LB\#1 & \textbf{68.22} & -- & 81.6 & 72.86 & 79.02 & -- & 0.4700 & -- & -- \\
\midrule
10M & \textbf{80.45} & +12.23\% & 84.9 & 81.91 & 67.86 & 0.6345 & 0.2939 & 575 & 550 \\
20M & \textbf{78.66} & +10.44\% & 83.1 & 80.08 & 67.43 & 0.6372 & 0.2993 & 621 & 594 \\
60M & \textbf{75.13} & +6.91\% & 81.5 & 76.19 & 59.58 & 0.5550 & 0.3853 & 612 & 587 \\
100M & \textbf{76.48} & +8.26\% & 81.8 & 77.91 & 58.04 & 0.5409 & 0.4025 & 609 & 585 \\
150M & \textbf{73.48} & +5.26\% & 79.8 & 74.93 & 54.84 & 0.5060 & 0.4453 & 788 & 763 \\
250M & \textbf{74.48} & +6.26\% & 81.2 & 76.44 & 53.28 & 0.4965 & 0.4550 & 642 & 618 \\
FULL (618M) & \textbf{72.88} & +4.66\% & 80.0 & 74.28 & 48.77 & 0.4381 & 0.5019 & 641 & 618 \\
\bottomrule
\end{tabular}
}
\caption{\textbf{Primary \dataset scaling results.} LB\#1 is the published benchmark reference, \(\Delta\) Combined is the absolute difference from LB\#1, and Promo and Demote count accepted state transitions over 500 queries. FULL is the complete 618M-token collection.}
\label{tab:scaling}
\end{table*}

\begin{figure}[!ht]
  \centering
  \includegraphics[width=\columnwidth]{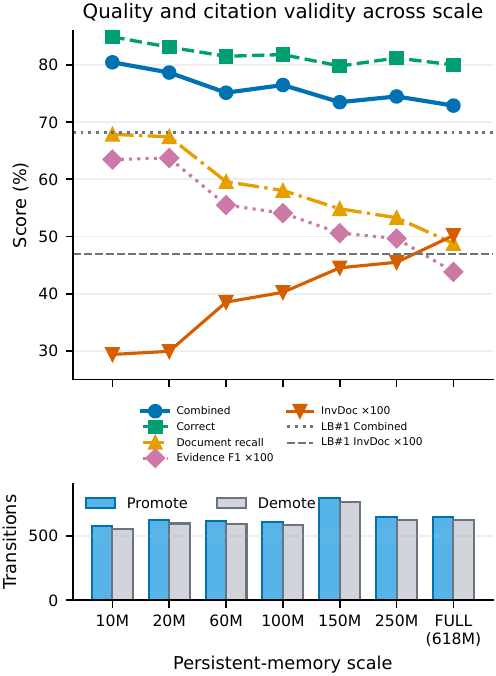}
    \caption{\textbf{Quality and managed-state activity across corpus scales.} \sys remains above the published LB\#1 Combined reference through the complete 618M-token collection.}
  \label{fig:scaling}
\end{figure}

\subsection{Token Consumption Analysis}
\label{sec:representation}

To evaluate whether dual memory reduces token consumption, we fix the retrieval trace and vary the representation provided to the answer model. Distilled memory is used for retrieval and ranking in all settings, while detailed memory is loaded according to the tested policy. The token comparison therefore isolates the packing effect of separating compact routing memory from detailed answer memory; the selective-detail quality rows are refreshed with the revised evidence-grounded answer prompt used for the headline runs.

Table~\ref{tab:packing} shows that dual memory provides a strong quality--cost trade-off. At 10M and 20M source tokens, selective detailed-memory loading reduces answer-input tokens by 63.9\% and 70.2\% relative to detailed-only loading while reaching Combined scores of 80.45\% and 78.66\%. Distilled-only loading uses fewer tokens but causes a large quality drop, showing that compact memory alone is insufficient for answering. Loading both representations also increases token use without improving performance. These results confirm that dual memory reduces context cost by using distilled memory for retrieval and loading detailed memory only for the most relevant sources.

\begin{table*}[!ht]
\centering
\setlength{\tabcolsep}{2pt}
\resizebox{0.9\linewidth}{!}{
\begin{tabularx}{\linewidth}{@{}lYrrr@{}}
\toprule
Scale & Memory provided for answering & \shortstack{Answer-input\\tokens/query} & \shortstack{Change\\(\%)} & \shortstack{Combined (\%)\\\(\uparrow\)} \\
\midrule
10M & Detailed only & 46,994 & -- & 82.68 \\
10M & Both representations & 49,174 & +4.6\% & 70.60 \\
10M & Distilled only & 2,963 & -93.7\% & 51.08 \\
10M & \textbf{Selective detailed memory (\sys)} & \textbf{16,944} & \textbf{-63.9\%} & \textbf{80.45} \\
\midrule
20M & Detailed only & 45,784 & -- & 77.91 \\
20M & Both representations & 51,078 & +11.6\% & 68.22 \\
20M & Distilled only & 3,082 & -93.3\% & 49.76 \\
20M & \textbf{Selective detailed memory (\sys)} & \textbf{13,629} & \textbf{-70.2\%} & \textbf{78.66} \\
\bottomrule
\end{tabularx}
}
\caption{\textbf{Token consumption under different memory policies at 10M and 20M.} Change is measured relative to detailed-only loading. The selective-detail rows report revised-prompt quality; token counts retain the matched packing traces.}
\label{tab:packing}
\end{table*}

Table~\ref{tab:runtime_tokens} in Appendix~\ref{app:additional_results} further decomposes online token use and shows that answer input is the main source of token cost.

\subsection{Comparison with Direct Retrieval Strategies}
\label{sec:frontier}

Table~\ref{tab:frontier} compares \sys with four direct retrieval strategies at both 10M and 20M source tokens: BM25~\cite{robertson2009bm25}, flat dense RAG~\cite{karpukhin2020dpr}, hierarchical retrieval with detailed memory only, and direct long-context packing~\cite{liu2024lost}. All methods use the same 500 questions, answer and evaluation models, and semantic embedding backend; the \sys rows report the revised evidence-grounded answer prompt used in the headline evaluation.

\begin{figure}[!ht]
  \centering
  \includegraphics[width=0.9\columnwidth]{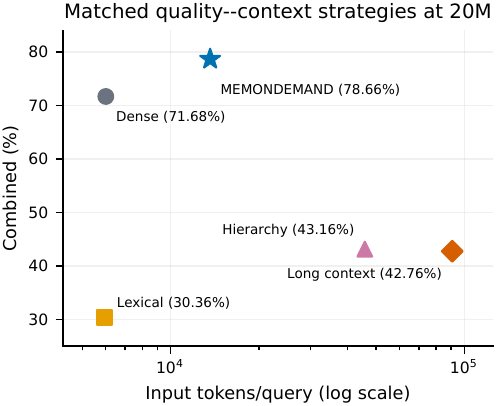}
  \caption{\textbf{Quality--context configurations at 20M.}
  Complete strategies use their own evidence-loading policies; the horizontal axis is
  logarithmic.}
  \label{fig:frontier}
\end{figure}

Across both scales, \sys achieves the highest Combined and Correct scores while substantially improving Document Recall over flat dense RAG. Although flat dense RAG uses fewer answer-input tokens, its retrieval coverage and answer quality are consistently lower. Detailed-only hierarchy retrieval and direct long-context packing consume several times more context but still produce much lower Combined scores. These results show that \sys provides the strongest quality--cost balance by combining hierarchical routing, selective detailed-memory loading, and on-demand promotion. Figure~\ref{fig:frontier} places the complete retrieval strategies on the quality--context frontier. \sys achieves the strongest Combined score without the high answer-input cost of detailed-only hierarchy retrieval or direct long-context packing.

\begin{table*}[!ht]
\centering
\resizebox{0.9\textwidth}{!}{%
\begin{tabular}{llrrrr}
\toprule
Scale & Strategy & Answer-input tokens/query & Combined (\%) \(\uparrow\) & Correct (\%) \(\uparrow\) & DocRcl (\%) \(\uparrow\) \\
\midrule
\multirow{5}{*}{10M}
& Flat dense RAG & 6,001 & 74.48 & 74.6 & 52.91 \\
& BM25 RAG, no promotion & 5,726 & 29.68 & 34.4 & 24.43 \\
& Hierarchical, detailed only & 46,994 & 46.80 & 52.2 & 70.03 \\
& Direct long-context packing & 93,318 & 44.44 & 49.4 & 78.05 \\
& \textbf{\sys} & \textbf{16,944} & \textbf{80.45} & \textbf{84.9} & \textbf{67.86} \\
\midrule
\multirow{5}{*}{20M}
& Flat dense RAG & 6,018 & 71.68 & 71.8 & 48.09 \\
& BM25 RAG, no promotion & 5,957 & 30.36 & 35.0 & 23.11 \\
& Hierarchical, detailed only & 45,784 & 43.16 & 48.2 & 66.87 \\
& Direct long-context packing & 90,832 & 42.76 & 47.8 & 75.68 \\
& \textbf{\sys} & \textbf{13,629} & \textbf{78.66} & \textbf{83.1} & \textbf{67.43} \\
\bottomrule
\end{tabular}
}
\caption{\textbf{End-to-end comparison with direct retrieval strategies at 10M and 20M.} All methods use the same 500 questions, answer/evaluation models, and embedding backend; the \sys rows use the revised answer prompt. \sys achieves the highest Combined and Correct scores at both scales while using substantially fewer answer-input tokens than detailed-only hierarchy retrieval and direct long-context.}
\label{tab:frontier}
\end{table*}





\subsection{Component Ablations}
\label{sec:ablation}

We evaluate the contribution of key design choices by modifying one component at a time. The no-promotion variant removes both promotion and demotion while keeping detailed-memory loading under the same answer-input budget. The one-step variant limits hierarchy navigation depth, and the larger-candidate variant increases the candidate pool from 15 to 25 sources.

We also compare the full system with flat dense retrieval, global source-leaf search without promotion, and the published LB\#1 reference. These comparisons separate the effects of hierarchical navigation and adaptive memory updates from direct source-level retrieval.

\begin{table*}[!ht]
\centering
\footnotesize
\resizebox{0.9\linewidth}{!}{%
\begin{tabular}{lrrrr}
\toprule
Method & DocRcl (\%) \(\uparrow\) & Correct (\%) \(\uparrow\) & Combined (\%) \(\uparrow\) & \(\Delta\) vs. Full (\%) \\
\midrule
\multicolumn{5}{l}{\textit{System variants}} \\
\textbf{Full system} & 67.86 & 84.9 & 80.45 & -- \\
One-step navigation & 59.41 & 79.4 & 64.52 & \(-15.93\%\) \\
No promotion & 59.60 & 82.7 & 64.20 & \(-16.25\%\) \\
25-candidate pool & 57.44 & 78.4 & 62.64 & \(-17.81\%\) \\
\midrule
\multicolumn{5}{l}{\textit{Retrieval baselines}} \\
Flat dense retrieval & 52.91 & 74.6 & 74.48 & \(-5.97\%\) \\
Global source-leaf search, no promotion & 68.25 & 81.2 & 76.68 & \(-3.77\%\) \\
\midrule
\multicolumn{5}{l}{\textit{Published reference}} \\
LB\#1 & 79.02 & 81.6 & 68.22 & \(-12.23\%\) \\
\bottomrule
\end{tabular}
}
\caption{\textbf{Component ablations and retrieval comparisons at 10M.} System variants modify one design while keeping the remaining settings fixed. \(\Delta\) reports the absolute Combined difference from the full system.}
\label{tab:ablation}
\end{table*}

Table~\ref{tab:ablation} shows that all three components improve Combined. One-step navigation, no promotion, and a 25-candidate pool reduce the score from 80.45\% to 64.52\%, 64.20\%, and 62.64\%, showing the value of deeper traversal, adaptive prioritization, and focused selection. The largest drop from the expanded candidate pool further indicates that exposing more sources can introduce distractors rather than improve evidence quality.

The full system also achieves the highest Combined score among all methods. Global source-leaf search reaches similar Document Recall but lower Correct and Combined, while flat dense retrieval and LB\#1 achieve only 74.48\% and 68.22\% Combined. This contrast shows that retrieval coverage alone does not determine answer quality. These results show that \sys benefits from coordinating retrieval depth, promotion, and evidence selection rather than maximizing recall alone.

\subsection{Efficiency of On-Demand Promotion}
\label{sec:availability}

We evaluate whether on-demand promotion reduces the wall time required to prepare reusable parent-level memory. At 20M source tokens, \sys prepares parent representations only when they are promoted by observed query demand, resulting in a measured wall time of 1.46 seconds. In contrast, eagerly preparing all promotable parent representations is estimated to require 593.5 seconds under the measured per-call latency, yielding a \(407\times\) reduction in up-front wall time. This avoids spending computation on parent nodes that may never be reused. Table~\ref{tab:availability} and Figure~\ref{fig:availability} report the comparison.

\begin{table}[!ht]
\centering
\small
\resizebox{0.9\columnwidth}{!}{%
\begin{tabular}{lr}
\toprule
Promotion strategy & Wall time \\
\midrule
On-demand promotion & 1.46 s \\
Eager promotion of all parent nodes & 593.5 s \\
Wall-time ratio & \(407\times\) \\
\bottomrule
\end{tabular}
}
\caption{\textbf{Preparation wall time under different promotion strategies at 20M.} On-demand promotion avoids eagerly preparing every promotable parent node.}
\label{tab:availability}
\end{table}

\subsection{Online Use of Promotion}
\label{sec:runtime_activity}

We examine how often on-demand promotion contributes during online retrieval at 20M source tokens. \sys uses 2.9 hierarchy-navigation steps per query on average, with a 95th-percentile depth of three steps, showing that promotion operates within a bounded retrieval process. Promotion is activated for 470 of the 500 questions, or 94.0\% of the evaluation set, indicating that it broadly updates candidate priority and reusable memory state. Together with the no-promotion ablation in Section~\ref{sec:ablation}, these results show that promotion is both frequently used during retrieval and important for final performance. Table~\ref{tab:runtime_activity} reports the detailed activity statistics, and Figure~\ref{fig:external_profile} visualizes the corresponding promotion profile.

\begin{table}[!ht]
\centering
\small
\resizebox{0.9\columnwidth}{!}{%
\begin{tabular}{lr}
\toprule
Online quantity & Value \\
\midrule
Mean navigation steps per query & 2.9 \\
95th-percentile navigation steps & 3 \\
Questions activating promotion & 470 / 500 (94.0\%) \\
\bottomrule
\end{tabular}
}
\caption{\textbf{Promotion activity at 20M.} Statistics cover the complete set of 500 evaluation queries.}
\label{tab:runtime_activity}
\end{table}

\subsection{External Benchmark Evaluation}
\label{sec:promotion_results}

We evaluate \sys on three external benchmarks with different retrieval and reasoning requirements. FinanceBench tests financial-document question answering~\cite{islam2023financebench}, HotpotQA tests multi-hop reasoning across documents~\cite{yang2018hotpotqa}, and FRAMES tests fact retrieval with additional reasoning~\cite{krishna2024frames}. As shown in Table~\ref{tab:external}, \sys achieves strong results across all three settings, including 80.00\% Document Recall on FinanceBench, 78.4\% Correct on HotpotQA, and 88.76\% Document Recall on FRAMES. These results show that the same memory design remains effective across financial, multi-hop, and fact-retrieval tasks rather than being limited to EnterpriseRAG-Bench.

\begin{table*}[!ht]
\centering
\footnotesize
\resizebox{0.7\linewidth}{!}{%
\begin{tabular}{lrrrrr}
\toprule
Benchmark & Questions & DocRcl (\%) \(\uparrow\) & Evid. F1 \(\uparrow\) & Correct (\%) \(\uparrow\) & InvDoc \(\downarrow\) \\
\midrule
FinanceBench & 150 & 80.00 & 0.729 & 66.0 & 0.289 \\
HotpotQA & 500 & 49.90 & 0.549 & 78.4 & 0.320 \\
FRAMES & 500 & 88.76 & 0.617 & 55.0 & 0.439 \\
\bottomrule
\end{tabular}
}
\caption{\textbf{External benchmark results.} \sys maintains strong retrieval and answering performance across financial-document, multi-hop, and fact-retrieval.}
\label{tab:external}
\end{table*}

The results confirm that \sys generalizes across datasets with different document structures, evidence patterns, and reasoning demands. Its dynamic hierarchy, dual memory, and on-demand promotion remain effective beyond the primary enterprise benchmark.

\section{Conclusion}
\label{sec:conclusion}
We present \sys, a memory management system for large-scale enterprise data. It combines a dynamic hierarchy, dual memory, and on-demand promotion. Across EnterpriseRAG-Bench scales from 10M tokens through the complete 618M-token collection, \sys consistently outperforms the published LB\#1 result. Results on external benchmarks further show that the same design transfers across different settings. \textbf{Overall, \sys provides a practical solution for extending LLMs with ultra-large memory beyond the limits of the model context window}.

\section*{Limitations}
\label{sec:limitations}

Our main scaling results and promotion-floor sweeps are based on single runs, so repeated-run variance and confidence intervals remain to be studied. The eager preparation cost is estimated from measured call latency rather than measured through a complete end-to-end run. In addition, Document Recall decreases at the largest scale, and the external benchmarks show that strong source recovery does not always translate into equally strong answer correctness. Future work should evaluate more domains, longer query streams, and broader model and retrieval configurations.

\bibliography{references}

\clearpage
\appendix

\section{Related Work}
\label{sec:related}

\subsection{Retrieval and Hierarchical Organization}

Retrieval-augmented generation combines non-parametric search with an answer model~\cite{lewis2020rag}, while dense passage retrieval learns a shared representation space for queries and passages~\cite{karpukhin2020dpr}. Large-scale vector systems such as HNSW, DiskANN, FreshDiskANN, SPANN, and Milvus improve candidate access through graph-based search, disk indexing, hybrid-memory designs, and distributed execution~\cite{malkov2020hnsw,subramanya2019diskann,singh2021freshdiskann,chen2021spann,wang2021milvus}. These methods focus mainly on efficient retrieval from large candidate spaces. In contrast, \sys manages how enterprise memory is organized, represented, loaded, and updated above the underlying search infrastructure. Adaptive-RAG changes retrieval effort according to question complexity, while Self-RAG learns when to retrieve and how to assess retrieved evidence~\cite{jeong2024adaptiverag,asai2024selfrag}. Their control is mainly query specific, whereas \sys also manages hierarchy construction, dual-memory representation, and cross-query memory updates.

Hierarchical retrieval methods organize information at multiple abstraction levels. RAPTOR recursively clusters and summarizes text into a retrieval tree~\cite{sarthi2024raptor}, GraphRAG constructs community summaries for graph-based retrieval~\cite{edge2024graphrag}, and HippoRAG uses graph structure to support long-term associative retrieval~\cite{gutierrez2024hipporag}. H-MEM performs layerwise memory access, while MemoRAG uses compact global memory to guide retrieval from a larger collection~\cite{sun2025hmem,qian2024memorag}. These methods show the value of hierarchical abstraction, but they generally rely on a predefined hierarchy construction process. \sys instead determines the hierarchy depth and abstraction structure separately for each collection, while preserving direct access to L0 source nodes when higher-level records omit relevant information.

\subsection{Compression and Persistent Memory}

Long-context models do not use information uniformly across context positions~\cite{liu2024lost}, and their effective task capacity can be lower than their stated context length~\cite{hsieh2024ruler}. LLMLingua and LongLLMLingua reduce prompt length under a token budget~\cite{jiang2023llmlingua,jiang2024longllmlingua}, while RECOMP compresses retrieved documents and can remove unhelpful retrieved content~\cite{xu2024recomp}. These methods reduce the size of a prompt or retrieved batch, but they do not maintain separate retrieval and answering representations at every hierarchy level. \sys stores distilled memory for routing and detailed memory for answering, allowing compact retrieval without replacing the information used for generation and citation.

Persistent-memory systems study how language-model agents store and reuse information across interactions. Generative Agents derives higher-level reflections from past observations~\cite{park2023generative}, MemoryBank supports long-term updating and forgetting~\cite{zhong2023memorybank}, MemGPT manages limited context through a virtual-memory design~\cite{packer2023memgpt}, and LongMem separates long-term storage from the answer model~\cite{wang2023longmem}. Mem0, A-MEM, MemoryOS, and MemOS further study memory extraction, organization, updating, and lifecycle control~\cite{chhikara2025mem0,xu2025amem,kang2025memoryos,li2025memos}. Most of these systems focus on interaction history, user memory, or agent experience. \sys instead targets very large enterprise collections and jointly supports dynamic hierarchy construction, dual memory at each level, selective detailed-memory loading, and on-demand promotion under bounded active state.

\subsection{Database Physical Design}

Database systems preserve stable record access while allowing storage layouts, materialized views, and indexing structures to change~\cite{codd1970relational}. Log-structured storage supports continuous updates, database cracking adapts access paths to observed queries, deferred materialization delays expensive data access, adaptive replacement manages limited cache capacity, and self-tuning systems revise physical design as workloads change~\cite{oneil1996lsm,idreos2007cracking,abadi2007materialization,megiddo2003arc,chaudhuri2007selftuning}. \sys brings these principles to enterprise memory by constructing a hierarchy for each collection, using distilled memory for efficient retrieval, loading detailed memory only when required for answering, and updating node priority through promotion and demotion as query demand changes.

\section{Implementation Details}
\label{app:implementation}

\subsection{Model and retrieval configuration.} 

GPT-5.4 mini constructs L0 distilled memory and extracts key facts, while GPT-5.4 performs higher-level hierarchy construction, navigation, source selection, answer generation, and \dataset evaluation. All semantic indices use 1,536-dimensional \texttt{text-embedding-3-small} embeddings~\cite{openai2026gpt54mini,openai2026gpt54,openai2026te3small}. Table~\ref{tab:implementation} summarizes the model assignment for each operation.

\paragraph{Evaluation protocol.} We follow the \dataset definitions for answers, supporting evidence, and source IDs~\cite{sun2026enterpriserag}. Each generated answer is compared with the canonical answer, and each returned source ID is checked against the benchmark evidence set. The predefined \texttt{STOP\_INSUFFICIENT} rule is applied during aggregation.

\begin{table}[!ht]
\centering
\resizebox{\columnwidth}{!}{%
\begin{tabular}{ll}
\toprule
Operation & API identifier \\
\midrule
L0 distillation and key facts & \texttt{gpt-5.4-mini} \\
Higher-level decision and abstraction & \texttt{gpt-5.4} \\
Navigation and source selection & \texttt{gpt-5.4} \\
Answer generation & \texttt{gpt-5.4} \\
\dataset evaluation & \texttt{gpt-5.4} \\
Semantic indexing & \texttt{text-embedding-3-small} \\
\bottomrule
\end{tabular}
}
\caption{\textbf{API assignment.} GPT-5.4 mini is used for L0 memory construction, while GPT-5.4 handles higher-level construction(L1 L2 and L3), retrieval, answering, and evaluation.}
\label{tab:implementation}
\end{table}

\subsection{Hierarchy Footprint}

Table~\ref{tab:hierarchy} reports the realized hierarchy size at each corpus tier, including the complete 618M-token collection. The number of L0 nodes increases with corpus scale, while the dynamic construction procedure determines the number of L1 and L2 nodes for each collection.

\begin{table}[!ht]
\centering
\footnotesize
\resizebox{0.8\columnwidth}{!}{%
\begin{tabular}{lrrr}
\toprule
Tier & L0 nodes & L1 nodes & L2 nodes \\
\midrule
10M  & 8,927   & 94  & 9 \\
20M  & 17,051  & 287 & 9 \\
60M  & 49,978  & 223 & 14 \\
100M & 82,779  & 287 & 16 \\
150M & 124,330 & 352 & 18 \\
250M & 207,179 & 455 & 21 \\
FULL (618M) & 511,962 & 716 & 27 \\
\bottomrule
\end{tabular}
}
\caption{\textbf{Realized hierarchy sizes.} The dynamic construction procedure determines the number of nodes at each level for every corpus tier.}
\label{tab:hierarchy}
\end{table}

\section{Promotion-Floor Sensitivity}
\label{sec:promotion_floor}

We control promotion through a similarity floor \(\theta\): a candidate node \(v\) is promoted only when its promotion score satisfies
\[
g_t(v)\ge\theta,
\]
where \(g_t(v)\in[0,1]\) measures its similarity to the current query and usage state. If the floor is too low, many weakly related nodes are promoted, which introduces noisy priority signals and can displace more useful candidates. If the floor is too high, few nodes are promoted, limiting the system's ability to adapt retrieval priority across queries. Tables~\ref{tab:floor10}, \ref{tab:floor150}, and~\ref{tab:floorfull}, together with Figures~\ref{fig:floor10}, \ref{fig:floor150}, and~\ref{fig:floorfull}, show the same non-monotone pattern at 10M, 150M, and the complete 618M-token collection: performance is lower under either excessive or insufficient promotion and is highest at an intermediate level. A floor of 0.5 achieves the best Combined score in every sweep, reaching 76.40\% at 10M, 71.91\% at 150M, and 72.88\% on the complete collection. At 618M, a floor of 0.6 is close in quality at 72.86\% but permits only 332 promotions, whereas the global optimum at 0.5 records 641 promotions and 618 demotions. We therefore use \(\theta=0.5\) as the default promotion floor. The 10M and 150M sweeps are matched sensitivity runs distinct from the revised-prompt results in Table~\ref{tab:scaling}; the 618M sweep uses the revised prompt, and its 0.5 setting is the FULL result in the primary table.

\begin{table}[!ht]
\centering
\footnotesize
\setlength{\tabcolsep}{5pt}
\begin{tabular}{rrr}
\toprule
Floor & Combined (\%) \(\uparrow\) & Promo \\
\midrule
0.1 & 72.65 & 717 \\
0.2 & 68.40 & 682 \\
0.3 & 70.61 & 708 \\
0.4 & 70.00 & 700 \\
\textbf{0.5} & \textbf{76.40} & \textbf{595} \\
0.6 & 72.80 & 283 \\
0.7 & 68.80 & 66 \\
0.8 & 68.80 & 19 \\
0.9 & 70.20 & 2 \\
\bottomrule
\end{tabular}
\caption{\textbf{Promotion-floor sensitivity at 10M.} Combined follows an inverted-U trend and is highest at a floor of 0.5. Promo counts accepted promotion decisions.}
\label{tab:floor10}
\end{table}

\begin{figure}[!ht]
  \centering
  \includegraphics[width=0.8\columnwidth]{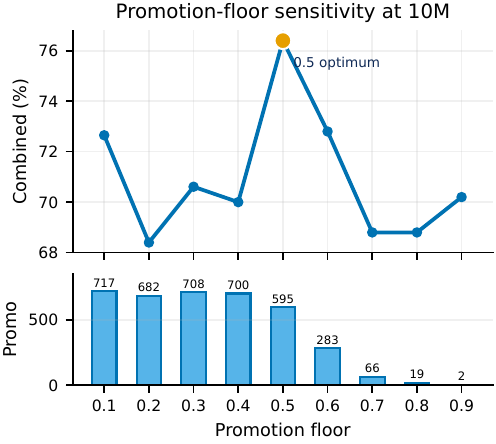}
  \caption{\textbf{Promotion-floor sensitivity at 10M.} Performance peaks at an intermediate similarity floor, while promotion activity decreases as the floor increases.}
  \label{fig:floor10}
\end{figure}

\begin{table}[!ht]
\centering
\footnotesize
\setlength{\tabcolsep}{5pt}
\begin{tabular}{rrr}
\toprule
Floor & Combined (\%) \(\uparrow\) & Promo \\
\midrule
0.1 & 69.21 & 799 \\
0.2 & 71.36 & 797 \\
0.3 & 71.03 & 802 \\
0.4 & 71.34 & 614 \\
\textbf{0.5} & \textbf{71.91} & \textbf{774} \\
0.6 & 70.55 & 302 \\
0.7 & 69.80 & 73 \\
0.8 & 69.98 & 4 \\
0.9 & 71.46 & 0 \\
\bottomrule
\end{tabular}
\caption{\textbf{Promotion-floor sensitivity at 150M.} The same inverted-U pattern appears at the larger scale, with the highest Combined score at a floor of 0.5.}
\label{tab:floor150}
\end{table}

\begin{figure}[!ht]
  \centering
  \includegraphics[width=0.8\columnwidth]{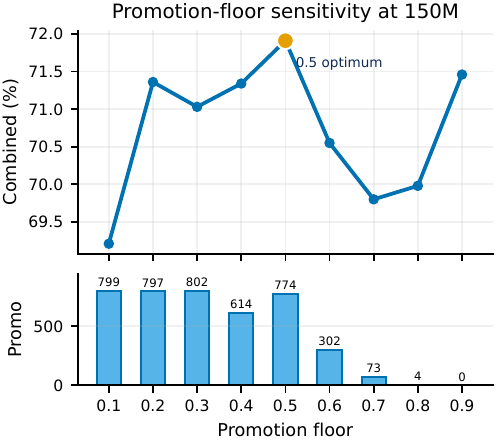}
  \caption{\textbf{Promotion-floor sensitivity at 150M.} An intermediate similarity floor again provides the best balance between excessive and insufficient promotion.}
  \label{fig:floor150}
\end{figure}

\begin{table}[!ht]
\centering
\footnotesize
\setlength{\tabcolsep}{4.5pt}
\begin{tabular}{rrrr}
\toprule
Floor & Combined (\%) \(\uparrow\) & Promo & Demote \\
\midrule
0.1 & 71.81 & 795 & 773 \\
0.2 & 71.09 & 792 & 770 \\
0.3 & 70.95 & 808 & 786 \\
0.4 & 72.36 & 793 & 770 \\
\textbf{0.5} & \textbf{72.88} & \textbf{641} & \textbf{618} \\
0.6 & 72.86 & 332 & 320 \\
0.7 & 72.11 & 77 & 74 \\
0.8 & 70.67 & 5 & 5 \\
0.9 & 70.82 & 0 & 0 \\
\bottomrule
\end{tabular}
\caption{\textbf{Promotion-floor sensitivity on FULL EnterpriseRAG (618M tokens).} A floor of 0.5 attains the highest Combined score in the nine-point sweep. Promo and Demote count accepted state transitions across 500 queries.}
\label{tab:floorfull}
\end{table}

\begin{figure}[!ht]
  \centering
  \includegraphics[width=0.85\columnwidth]{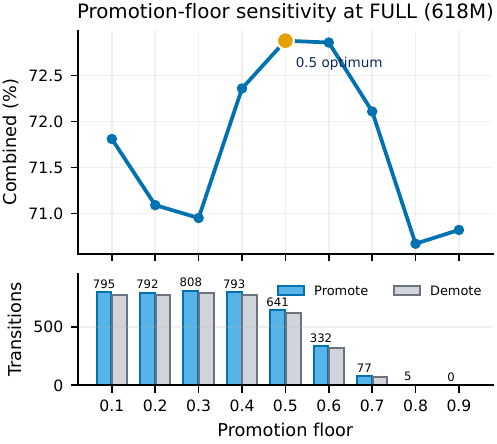}
  \caption{\textbf{Promotion-floor sensitivity on the complete 618M-token collection.} Quality peaks at 0.5; both promotion and demotion activity fall sharply as the floor becomes more selective.}
  \label{fig:floorfull}
\end{figure}

\section{Detailed Algorithms}
\label{app:mechanisms}

This appendix provides the full procedures for the three mechanisms introduced in Section~\ref{sec:system}: dynamic hierarchy construction, dual-memory retrieval with selective detailed-memory loading, and on-demand promotion with demotion.

\paragraph{Dynamic hierarchy construction.} Algorithm~\ref{alg:hierarchy} starts from the L0 source nodes and iteratively proposes a higher abstraction level. At each step, the controller evaluates the proposed grouping and either accepts it, requests a revised grouping, or stops construction. The process therefore determines both the structure and depth of the hierarchy from the current collection rather than fixing them in advance.

\begin{algorithm}[!ht]
\caption{Dynamic multi-level hierarchy construction}
\label{alg:hierarchy}
\small
\begin{algorithmic}[1]
\Require L0 nodes \(\mathcal{V}_0\), maximum depth \(D\), maximum proposals \(A\)
\State \(\mathcal{T}\gets\mathcal{V}_0;\ \ell\gets0;\ a\gets0\)
\While{\(\ell<D\) and \(a<A\)}
  \State \(\mathcal{C}\gets\Call{ProposeGroups}{\mathrm{dist}(\mathcal{V}_\ell)};\ a\gets a+1\)
  \State \(s\gets\Call{ProposalStats}{\mathcal{C}}\)
  \State \(u\gets\Call{Controller}{s}\)
  \If{\(u=\textsc{Stop}\)}
    \State \textbf{break}
  \EndIf
  \If{\(u=\textsc{Recluster}\)}
    \State revise the grouping proposal
    \State \textbf{continue}
  \EndIf
  \State \(\mathcal{V}_{\ell+1}\gets\Call{BuildParentRecords}{\mathcal{C}}\)
  \For{\(p\in\mathcal{V}_{\ell+1}\)}
    \State \(\mathrm{children}(p)\gets\{v\in\mathcal{V}_\ell:v\text{ belongs to }p\}\)
  \EndFor
  \State add \(\mathcal{V}_{\ell+1}\) and their edges to \(\mathcal{T}\)
  \State \(\ell\gets\ell+1\)
\EndWhile
\State \Return \(\mathcal{T}\)
\end{algorithmic}
\end{algorithm}

\paragraph{Dual-memory retrieval and selective detailed-memory loading.} Algorithm~\ref{alg:selection} performs coarse-to-fine retrieval over distilled memory until it reaches L0. Signals from different hierarchy paths are merged for each source candidate, after which only the highest-ranked detailed memories that fit the answer-input budget are loaded. This procedure uses compact memory for retrieval while reserving the answer context for selected detailed evidence.

\begin{algorithm}[!ht]
\caption{Dual-memory retrieval and selective detailed-memory loading}
\label{alg:selection}
\small
\begin{algorithmic}[1]
\Require query \(q\), hierarchy \(\mathcal{T}\), answer-input budget \(B\)
\State \(\mathcal{V}\gets\Call{HighestLevel}{\mathcal{T}}\)
\While{\(\mathcal{V}\not\subseteq L_0\)}
  \State \(\mathcal{S}\gets\Call{RetrieveDistilled}{q,\mathcal{V}}\)
  \State \(\mathcal{V}\gets\bigcup_{v\in\mathcal{S}}\mathrm{children}(v)\)
\EndWhile
\State \(\mathcal{R}\gets\Call{MergeRouteSignals}{\mathcal{V}}\)
\State \(\mathcal{C}\gets\Call{RankCandidates}{\mathcal{R}}\)
\State \(X\gets\varnothing;\ t_{\mathrm{used}}\gets\Call{FixedAnswerCost}{q}\)
\For{\(v\in\mathcal{C}\)}
  \If{\(t_{\mathrm{used}}+\tau_v^d\le B\)}
    \State append \(d_v\) to \(X\)
    \State \(t_{\mathrm{used}}\gets t_{\mathrm{used}}+\tau_v^d\)
  \EndIf
\EndFor
\State \((a,C_{\mathrm{pred}})\gets\Call{Answer}{q,X}\)
\State \(C_{\mathrm{pred}}\gets C_{\mathrm{pred}}\cap\mathrm{ID}(X)\)
\State \Return \((a,C_{\mathrm{pred}},X)\)
\end{algorithmic}
\end{algorithm}

\paragraph{On-demand promotion and demotion.} Algorithm~\ref{alg:promotion} updates node priority from query-time evidence and prior usage. Nodes above the promotion threshold receive a current-query score increase and enter the cross-query active state. Repeated use refreshes their retention, while expired or low-retention nodes are demoted when the active-state budget is exceeded.

\begin{algorithm}[!ht]
\caption{On-demand memory promotion and demotion}
\label{alg:promotion}
\small
\begin{algorithmic}[1]
\Require query \(q_t\), candidates \(\mathcal{C}_t\), active state \(\mathbf{z}_t\), threshold \(\theta\), active-state budget \(K\)
\For{\(v\in\Call{Deduplicate}{\mathcal{C}_t}\)}
  \State \(g_t(v)\gets\Call{PromotionScore}{q_t,v,\mathbf{z}_t}\)
  \If{\(g_t(v)\ge\theta\)}
    \State increase the current ranking score of \(v\)
    \State upsert \((v,g_t(v),t)\) into \(\mathbf{z}_t\)
    \State append \textsc{Promote}\((v)\) to the transition log
  \ElsIf{\(v\in\operatorname{active}(\mathbf{z}_t)\)}
    \State refresh the score and last-use time of \(v\)
  \EndIf
\EndFor
\For{\(v\in\operatorname{active}(\mathbf{z}_t)\)}
  \State \(\rho_t(v)\gets\max\left(0,s_t(v)-\frac{t-t_{\mathrm{last}}(v)}{\tau}\right)\) (Equation~\ref{eq:retention})
\EndFor
\While{\(|\operatorname{active}(\mathbf{z}_t)|>K\) or an active node has expired}
  \State demote the expired or minimum-retention node
  \State append \textsc{Demote} to the transition log
\EndWhile
\State \Return updated ranking scores and \(\mathbf{z}_{t+1}\)
\end{algorithmic}
\end{algorithm}

\section{System Properties}
\label{app:properties}

This section states several properties of the \sys execution process. These results describe retrieval coverage, candidate use, token accounting, and deferred preparation; they do not guarantee answer correctness.

\paragraph{Complementary retrieval routes.} Let \(H_d\) denote the event that a gold source is recovered through direct L0 retrieval, and let \(H_h\) denote recovery through hierarchical retrieval over distilled memory. The recall of their union is
\[
R_{\mathrm{fuse}}=\Pr(H_d\cup H_h)=R_d+\Pr(H_h\cap H_d^c).
\]
The hierarchical route therefore improves recall only when it recovers a source missed by direct L0 retrieval. This result explains why \sys combines the two routes rather than replacing direct retrieval with hierarchy traversal.

Let \(L=(h_1,h_2,\ldots)\) be a ranked list of retrieval hits, and let \(\sigma(h)\) denote the L0 source associated with hit \(h\). We compare two procedures under a source budget \(k\): taking the first \(k\) hits and then removing duplicates, or scanning \(L\) until \(k\) distinct sources are collected.

\begin{proposition}[Distinct-source coverage]
\label{thm:dedup}
If \(L\) contains at least \(k\) distinct sources, scanning until \(k\) distinct sources are collected returns a superset of the sources obtained by deduplicating the first \(k\) hits.
\end{proposition}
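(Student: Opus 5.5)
Define the two outputs explicitly and compare them as prefixes of the same list. Let \(S_{\mathrm{trunc}}=\{\sigma(h_1),\ldots,\sigma(h_k)\}\) be the source set obtained by deduplicating the first \(k\) hits. Let \(m\) be the smallest index such that \(\{\sigma(h_1),\ldots,\sigma(h_m)\}\) contains \(k\) distinct sources. This index exists because \(L\) contains at least \(k\) distinct sources. The scanning procedure then returns \(S_{\mathrm{scan}}=\{\sigma(h_1),\ldots,\sigma(h_m)\}\), with \(|S_{\mathrm{scan}}|=k\).

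The key step is to show \(m\ge k\). Any prefix of length \(j\) contains at most \(j\) distinct sources, since \(\sigma\) maps \(j\) hits to at most \(j\) values. A prefix therefore cannot reach \(k\) distinct sources before length \(k\), so \(m\ge k\). The first \(k\) hits are thus a prefix of the first \(m\) hits. Taking images under \(\sigma\) preserves inclusion, so \(S_{\mathrm{trunc}}\subseteq S_{\mathrm{scan}}\), which is the claim.

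I will also note the quantitative consequence: \(|S_{\mathrm{trunc}}|\le k=|S_{\mathrm{scan}}|\). Equality holds exactly when the first \(k\) hits map to distinct sources, in which case \(m=k\) and the two sets coincide. Otherwise the inclusion is strict whenever \(L\) has duplicate sources among its top \(k\) hits.

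The only point needing care is the precise definition of the stopping rule: the scan stops at the first hit that brings the distinct count to \(k\). Once that is fixed, the argument is the pigeonhole bound \(m\ge k\) followed by monotonicity of images of prefixes. I expect no real obstacle beyond stating this convention clearly.
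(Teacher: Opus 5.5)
Your proof is correct and follows the same argument as the paper. The paper's claim that every source among the first \(k\) hits is encountered before the scan terminates is your bound \(m\ge k\), which you justify explicitly by the pigeonhole count, and nesting of prefixes then gives the inclusion; your remark on when the inclusion is strict also holds under the hypothesis that \(L\) has at least \(k\) distinct sources.
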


\begin{proof}
Every source appearing among the first \(k\) hits is encountered before the distinct-source scan terminates. Duplicate hits allow the scan to continue to later positions and include additional sources. Therefore, distinct-source collection cannot reduce source coverage under the same budget.
\end{proof}

\paragraph{Answer-input accounting.} Let \(E_q\) be the set of L0 detailed memories loaded for query \(q\), and let \(T_0(q)\) denote the fixed token cost of the question and instructions. The total answer-input cost is
\[
T(q)=T_0(q)+\sum_{v\in E_q}\tau_v^d\le B.
\]
If a detailed-only policy would load a candidate set \(\mathcal{C}_q\), selective detailed-memory loading saves $\sum_{(v\in\mathcal{C}_q\setminus E_q)} \tau_v^d$ tokens. Distilled-memory tokens are used during retrieval and do not enter the answer-input budget.

\paragraph{Deferred-preparation cost.} Let \(c_v\) be the cost of preparing a representation for node \(v\), and let \(T_v\) be the first query step at which that representation is needed. Over a query horizon \(H\), the expected on-demand preparation cost is:
\begin{equation}
\begin{aligned}
\mathbb{E}[C_{\mathrm{on\mbox{-}demand}}(H)]&=\sum_v c_v\Pr(T_v\le H)\\&\le\sum_v c_v=C_{\mathrm{eager}}.
\end{aligned}
\end{equation}
Thus, preparing a representation only when it is first needed cannot cost more than preparing every representation in advance over the same horizon. This is a cost-accounting result rather than an end-to-end latency guarantee, and it applies only to representations whose construction is actually deferred.

\section{Additional Scaling and Component Results}
\label{app:additional_results}

\subsection{Scaling of Direct Source-Level Retrieval}
\label{app:leafscale}

Table~\ref{tab:leafscale} reports the performance of global source-leaf search without promotion across increasing collection sizes. Document Recall and Correct generally decrease as the collection grows, showing the limits of direct source-level retrieval without adaptive hierarchy navigation and promotion.

\begin{table}[!ht]
\centering
\small
\begin{tabular}{lrr}
\toprule
Scale & DocRcl (\%) \(\uparrow\) & Correct (\%) \(\uparrow\) \\
\midrule
10M & 68.25 & 81.2 \\
20M & 65.19 & 80.2 \\
60M & 55.02 & 74.2 \\
100M & 59.04 & 74.4 \\
150M & 55.30 & 77.2 \\
\bottomrule
\end{tabular}
\caption{\textbf{Global source-leaf retrieval across corpus scales.} Document Recall and Correct are reported as percentages.}
\label{tab:leafscale}
\end{table}

\subsection{Online Token Use}
\label{app:runtime_tokens}

Table~\ref{tab:runtime_tokens} decomposes online token use at 20M into retrieval-control and answer-generation costs. Mean is the average number of tokens per query, Median is the 50th-percentile value, and p95 is the 95th-percentile value, meaning that 95\% of queries use no more than this number of tokens. Answer input is the largest component, while retrieval-control output and answer output account for only a small share of the total.

\begin{table}[!ht]
\centering
\small
\begin{tabular}{lrrr}
\toprule
Runtime component & Mean & Median & p95 \\
\midrule
Retrieval-control input & 3,758 & 3,969 & 4,270 \\
Retrieval-control output & 155 & 155 & 186 \\
Answer input & 13,629 & 13,284 & 19,413 \\
Answer output & 142 & 124 & 291 \\
\midrule
Total online tokens & 17,683 & 17,425 & 23,534 \\
\bottomrule
\end{tabular}
\caption{\textbf{Online token use at 20M.} Columns report the mean, median, and 95th-percentile token count per query. Total tokens include retrieval-control input and output and answer input and output.}
\label{tab:runtime_tokens}
\end{table}

\subsection{External visualization Results}
\label{app:external_availability}

Figure~\ref{fig:packing} compares answer quality with answer-input cost across evidence-loading policies. It shows that selective detailed-memory loading preserves strong Combined performance while using substantially fewer tokens than detailed-only or combined-representation loading.

\begin{figure}[!ht]
  \centering
  \includegraphics[width=\columnwidth]{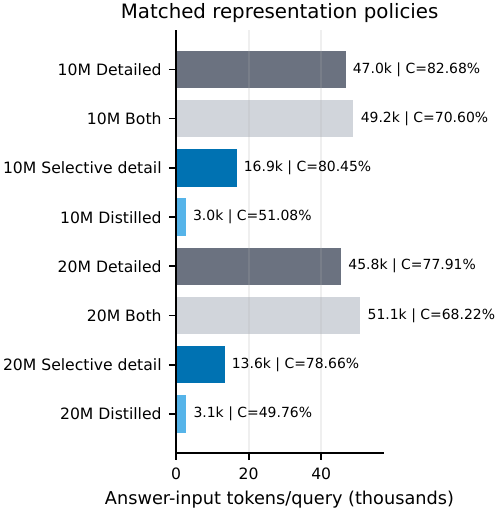}
  \caption{\textbf{Quality and answer-input cost.}
  Labels report Combined for the evidence-loading policies in Table~\ref{tab:packing}.}
  \label{fig:packing}
\end{figure}



Figure~\ref{fig:ablation} visualizes the degradation caused by removing or restricting individual components. The full system performs best, confirming that multi-step navigation, on-demand promotion, and controlled candidate selection contribute jointly to final quality.

\begin{figure}[!ht]
  \centering
  \includegraphics[width=\columnwidth]{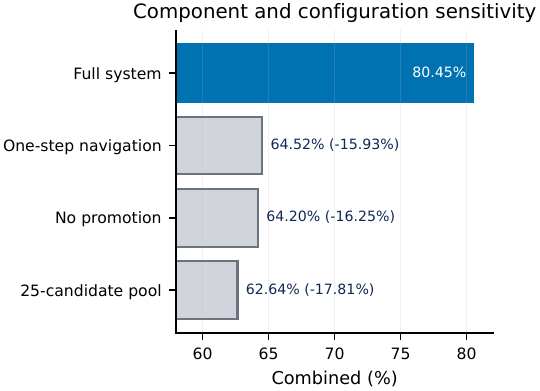}
  \caption{\textbf{Component and configuration sensitivity.}
  Promotion removal disables both the current-query bonus and the managed-state update.}
  \label{fig:ablation}
\end{figure}




Figure~\ref{fig:availability} compares the up-front wall time of on-demand promotion with eager preparation of all parent-level representations. The large gap shows that on-demand promotion avoids preparing memory that may never be reused, substantially reducing the cost required before query execution.

\begin{figure}[!ht]
  \centering
  \includegraphics[width=0.7\columnwidth]{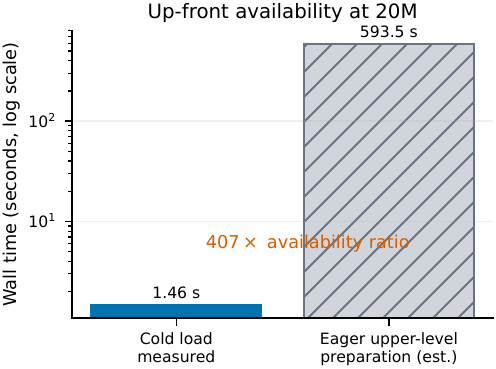}
  \caption{\textbf{Up-front availability at 20M.} The cold load is measured, while eager parent preparation is estimated.}
  \label{fig:availability}
\end{figure}

Figure~\ref{fig:external_profile} summarizes retrieval and answering performance on FinanceBench, HotpotQA, and FRAMES. The results show that source recovery and answer correctness vary across tasks, reflecting their different evidence structures and reasoning requirements.

\begin{figure}[!ht]
  \centering
  \includegraphics[width=0.8\columnwidth]{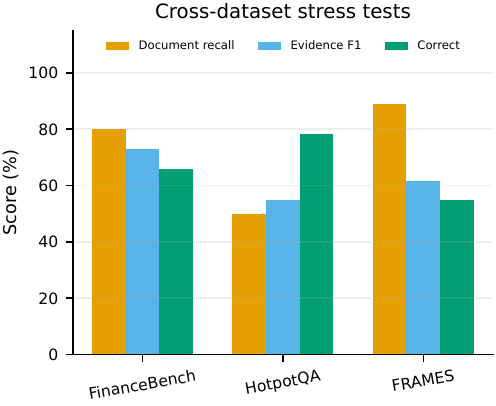}
  \caption{\textbf{External benchmark profile.} Source recovery and answer correctness vary across the three evaluation settings.}
  \label{fig:external_profile}
\end{figure}

\section{Prompt Templates}
\label{app:prompts}

This section provides the prompt templates used for memory construction, hierarchy control, retrieval, promotion, answer generation, and evaluation. The final answer template is transcribed from \texttt{improved\_answer\_prompt.py}, with only TeX escaping and line wrapping changed for presentation.

\begin{promptbox}[teal]{Source-leaf distilled-record construction}
\textbf{System:} Rewrite the source into a distilled routing record containing topic,
entities, dates, identifiers, and decision-bearing facts.  Preserve negation and qualifiers.
Do not add outside knowledge.  Return JSON with \texttt{distilled\_text},
\texttt{key\_facts}, and \texttt{source\_id}.

\textbf{User:} One source chunk with tenant metadata, section path, and source ID.
\end{promptbox}

\begin{promptbox}[blue]{Parent-abstraction decision}
\textbf{System:} Inspect coherence, residual noise, group sizes, representative content, and
remaining budget.  Return exactly \texttt{CREATE}, \texttt{RECLUSTER}, or \texttt{STOP}, plus
a reason.  Never alter source IDs or descendant links.

\textbf{User:} Proposal statistics and representative distilled routing records.
\end{promptbox}

\begin{promptbox}[orange]{Parent-abstraction construction}
\textbf{System:} Summarize the child records for routing.  Preserve entities, time ranges,
constraints, exceptions, and disagreements.  Return the unchanged descendant source IDs and
introduce no facts absent from the children.

\textbf{User:} A bounded cluster of distilled routing records and source IDs.
\end{promptbox}

\begin{promptbox}[violet]{Navigation and source selection}
\textbf{System:} Use distilled routing records only for search.  Return source IDs from the supplied
candidate set and stop when the evidence need is resolved or the navigation budget is spent.

\textbf{User:} Question, hierarchy frontier, route scores, and managed-state metadata.
\end{promptbox}

\begin{promptbox}[red]{Promotion gate}
\textbf{System:} Return each supplied source ID, promotion decision, and score.  Promotion
may change current priority and later reuse but does not load a detailed evidence payload.

\textbf{User:} Question, unique-source candidates, route signals, state, and capacity.
\end{promptbox}

\begin{promptbox}[green!60!black]{Evidence-grounded answer}
\textbf{System:} You are an enterprise memory question-answering assistant.

You will be given: (1) a user \texttt{QUERY}; (2) \texttt{[CTX]} hierarchy context blocks for
background scope, which must not be cited; and (3) \texttt{[EVID]} evidence blocks, each with a
\texttt{node\_id}. Some evidence blocks contain only a \texttt{[BRIEF]} distilled summary;
top-ranked evidence also contains the complete \texttt{[FULL]} source text.

\textbf{Critical anti-hallucination rules:}
\begin{itemize}[leftmargin=*,nosep]
  \item Every specific value stated---a number, threshold, date, flag name, configuration key,
  dollar amount, percentage, duration, person name, or other concrete detail---must be an exact
  copy of text that literally appears in an \texttt{[EVID]} block. Do not paraphrase numbers,
  infer a value, or estimate one even if it appears reasonable.
  \item Before writing any specific value, silently verify: ``Can I point to the exact substring
  in \texttt{[EVID]} that contains this value?'' If not, omit the value or state that the
  document does not specify it.
  \item If part of the answer is supported but another part requires guessing a specific value,
  answer only the supported part and explicitly note what is not specified.
  \item Never fill in a plausible number, date, or threshold merely to be helpful. An incomplete
  but accurate answer is preferable to a complete but partially fabricated answer.
  \item Inspect conversational or reply sections in \texttt{[FULL]} text, including review
  conversations, comment threads, and follow-up messages. These may contain the controlling
  configuration key or value rather than the initial description.
\end{itemize}

\textbf{Rules:}
\begin{itemize}[leftmargin=*,nosep]
  \item Write a precise, factual answer of one to five sentences using only information from
  \texttt{[EVID]} blocks.
  \item Ground the answer primarily in \texttt{[FULL]} source text. Treat \texttt{[BRIEF]} as a
  topic hint only. Write one to three precise sentences, and make every factual claim traceable
  to a specific \texttt{[FULL]} passage.
  \item After the answer, on a new line, output \texttt{CITED: id1,id2,...}.
  \item Cite every \texttt{[EVID]} node whose \texttt{[FULL]} or \texttt{[BRIEF]} content was
  used. If three or more nodes contributed, cite all of them. Do not cite \texttt{[CTX]}
  node IDs.
  \item If no \texttt{[EVID]} context applies, output \texttt{INSUFFICIENT EVIDENCE} and an empty
  \texttt{CITED:} line. Never invent facts absent from the provided context.
\end{itemize}

\textbf{User:} Query, hierarchy context blocks, and evidence blocks with source IDs.
\end{promptbox}

\begin{promptbox}[magenta]{Benchmark evaluation}
\textbf{System:} Compare the candidate and canonical answers.  Return schema-constrained JSON
with \texttt{correct}, \texttt{validated\_facts}, and \texttt{missing\_facts}.

\textbf{User:} Question, candidate answer, and canonical answer.
\end{promptbox}

\end{document}